\newcommand{\printsolver}[1]{\textsc{#1}\xspace}
\newcommand{\chuffed}{\printsolver{chuffed}}
\newcommand{\libmzn}{\printsolver{libmzn}}
\newcommand{\false}{\mathit{false}}
\newcommand{\true}{\mathit{true}}
\newcommand{\vars}{\mathit{vars}}
\newcommand{\head}{\mathit{head}}
\newcommand{\mn}{\mathit{min}}
\newcommand{\mx}{\mathit{max}}
\newcommand{\sm}{\mathit{sum}}
\newcommand{\rules}{\mathit{rules}}
\newcommand{\created}{\mathit{cr}}
\newcommand{\ujb}{\mathit{ujb}}
\newcommand{\ground}{\mathit{grnd}}
\newcommand{\gen}{\mathit{gen}}
\newcommand{\constraints}{\mathit{constraints}}
\newcommand{\set}{\mathit{set}}
\newcommand{\where}{\text{ where }}
\newcommand{\VV}{{\cal V}}
\newcommand{\AFV}{{\cal F}}
\newcommand{\ANV}{{\cal S}}
\newcommand{\bif}{\textbf{if}}
\newcommand{\belseif}{\textbf{elif}}
\newcommand{\bfor}{\textbf{for}}
\newcommand{\bwhile}{\textbf{while}}
\newcommand{\breturn}{\textbf{return }}
\DeclareMathOperator{\cupeq}{{\cup\!\!=}}
\DeclareMathSymbol{\naf}{\mathord}{symbols}{"18}
\newcommand{\rehan}[1]{\marginpar{\color{Periwinkle}\sc Rehan}
\textbf{\color{Periwinkle}[#1]}}
\newcommand{\ignore}[1]{}
\newcommand{\cupp}{\ensuremath{\cup}:=~}
\newcommand{\cre}{cr}
\def\inc(#1,#2){#1_{#2}^{\uparrow}}
\def\dec(#1,#2){#1_{#2}^{\downarrow}}
\def\nonmon(#1,#2){#1_{#2}^{\updownarrow}}
\def\scope(#1,#2){#1|_#2}
\def\fdom(#1,#2,#3){#1_{#2}^{#3}}
\def\tpi(#1){T_P \uparrow {#1}}
\def\jb(#1,#2){\mathit{jb}_{#2}({#1})}
\def\push(#1,#2){{#1} \cupp {#2}}
\def\maxd(#1,#2){\mx(#1(#2))}
\def\mind(#1,#2){\mn(#1(#2))}
\newcommand{\minf}{-\infty}
\newcommand{\proj}[2]{\exists_{#1}(#2)}
\newcommand{\red}[2]{{#1}^{#2}}
\newtheorem{theorem}{Theorem}
\newtheorem{example}{Example}
\newcommand{\noproofs}[1]{}
\newcommand{\noexamples}[1]{#1}
\title{
Grounding Bound Founded Answer Set Programs
}
\author[R. A. Aziz and G. Chu and P. J. Stuckey]
	{Rehan Abdul Aziz, Geoffrey Chu and Peter J. Stuckey \\
	National ICT Australia, Victoria Laboratory,\thanks{NICTA is funded by the Australian Government as represented by the
Department of Broadband, Communications and the Digital Economy and the
Australian Research Council through the ICT Centre of Excellence program.}
        \\
	Department of Computing and Information Systems, \\
	University of Melbourne, Australia \\
	Email:~{\tt raziz@student.unimelb.edu.au, gchu@csse.unimelb.edu.au, pjs@csse.unimelb.edu.au}
	}
\begin{document}

\label{firstpage}

\maketitle
\begin{abstract}
Bound Founded Answer Set Programming (BFASP) is an extension of 
Answer Set Programming (ASP) that extends stable model
semantics to numeric variables. 
While the theory of BFASP is defined on ground rules, in practice BFASP
programs are written as complex non-ground expressions.
Flattening of BFASP 
is a technique used to simplify arbitrary expressions of 
the language to a small and well defined set of primitive expressions.
In this paper, 
we first show how we can flatten arbitrary BFASP rule expressions,
to give equivalent BFASP programs. 
Next, we extend the 
bottom-up grounding technique and magic set transformation used by ASP 
to BFASP programs. 
Our implementation shows that for BFASP problems, 
these techniques can significantly reduce the ground program size,
and improve subsequent solving.
\end{abstract}
\begin{keywords}
    Answer Set Programming, Grounding, Flattening, Constraint ASP, Magic Sets
\end{keywords}

\section{Introduction}

Many problems in the areas of planning or reasoning can be efficiently
expressed using Answer Set Programming (ASP)~\cite{baral03}. 
ASP enforces stable model semantics~\cite{stable} on the program, which
disallows solutions representing circular reasoning. For example, given only
rules $b \leftarrow a$ and $a \leftarrow b$, the assignment $a = \true, b = \true$ would be a solution under the
logical semantics normally used by Boolean Satisfiability (SAT)~\cite{sat_solver}
solvers or Constraint Programming (CP)~\cite{book} solvers, but would not be a
solution under the stable model semantics used by ASP solvers. 

Bound Founded Answer Set Programming (BFASP)~\cite{bfasp} is an extension of
ASP to allow \emph{founded} integer and real variables.  
This makes it possible to concisely express and efficiently solve problems involving inductive definitions of numeric
variables where we want to disallow circular reasoning. As an example consider the Road Construction problem (\textsl{RoadCon}).  
We wish to decide which roads to build such that the shortest paths between various cities are acceptable, with the minimal total 
cost. This can be  modeled as:

\newcommand{\Edge}{\mathit{Edge}}
\newcommand{\Node}{\mathit{Node}}
\newcommand{\Demand}{\mathit{Demand}}
\newcommand{\built}{\mathit{built}}
\newcommand{\cost}{\mathit{cost}}
\newcommand{\efrom}{\mathit{from}}
\newcommand{\eto}{\mathit{to}}
\newcommand{\len}{\mathit{len}}
\newcommand{\shp}{\mathit{sp}}
\newcommand{\demfrom}{\mathit{d\_from}}
\newcommand{\demto}{\mathit{d\_to}}
\newcommand{\dem}{\mathit{demand}}

$$
\begin{array}{l}
\text{minimize} \sum_{e \in \Edge} \built[e] \times \cost[e] \\
\forall y  \in \Node: \shp[y,y] \leq 0 \\
\forall y \in \Node, e \in \Edge: \shp[\efrom[e],y] \leq \len[e] + \shp[\eto[e],y] \leftarrow \built[e] \\ 
\forall y \in \Node, e \in \Edge: \shp[\eto[e],y] \leq \len[e] + \shp[\efrom[e],y] \leftarrow \built[e] \\ 
\forall p \in \Demand: \shp[\demfrom[p],\demto[p]] \leq \dem[p] \\
\end{array}
$$
The decisions are which edges $e$ are built ($\built[e]$).
The aim is to minimize the total cost of the edges $\cost[e]$ built.
The first rule is a base case that says that shortest path from a node to itself
is 0.
The second constraint defines the shortest path $\shp[x,y]$ from $x$ to $y$:
the path from $x$ to $y$ is no longer than from $x$ to $z$ along edge $e$ if
it is built plus the shortest path from $z$ to $y$;
and the third constraint is similar for the other direction of the edge.
The last constraint ensures 
that the shortest path for each of a given set
of paths $p \in \Demand$ is 
no longer than its maximal allowed distance $\dem[p]$.
The above model has a trivial solution with cost 0 by
setting $\shp[x,y] = 0$ for all
$x,y$. In order to avoid this, we require that the $\shp$ variables are
(upper-bound) \emph{founded} variables, that is they take the largest
possible justified value.  The first three constraints are actually
\emph{rules} which justify upper bounds on $\shp$, the last constraint is a
restriction that needs to be met and cannot be used to justify upper bounds. 
Solving such a BFASP is challenging, mapping to CP models leads to inefficient
solving, and hence we need a BFASP solver which can reason directly about 
\emph{unfounded sets}~\cite{vangelder} of numeric assumptions.
Note that \emph{Constraint ASP} (CASP) and hybrid systems such
as those given by \cite{mellarkod_ai,clingcon,inca_ng,asp_via_mip,ezcsp,inductive_cp} cannot solve the above problem without grounding
the numeric domain to propositional variables and running into the grounding bottleneck. 
BFASP has been shown to subsume CP, ASP, CASP and Fuzzy ASP~\cite{NieuwenborghCV06,blondeel2013fuzzy}, see~\cite{bfasp} for details.

The above encoding for Road Construction problem is a \emph{non-ground} BFASP since it is
parametric in the data: $\Node$, $\Edge$, $\Demand$, $\cost$, $\efrom$, $\eto$,
$\len$, $\demfrom$, $\demto$ and $\dem$. In this paper we consider how to
efficiently create a ground BFASP from a non-ground BFASP given the data.
This is analogous to \emph{flattening}~\cite{mznfn} of constraint models
and \emph{grounding}~\cite{lparse,gringo,dlvgrounder} of ASP programs.
The contributions of this paper are:
a flattening algorithm that 
transforms complex expressions to primitive forms while preserving the stable
model semantics, 
a generalization of bottom-up grounding for normal logic programs 
to BFASPs and a generalization of the 
magic set transformation \cite{original_magic,power_magic} for normal logic programs to BFASPs.

\section{Preliminaries}

\subsection{Constraints and Answer Set Programming}

We consider three types of variables: integer, real, and Boolean.
Let $\VV$ be a set of variables.
A \emph{domain} $D$ maps each variable $x \in \VV$ to a set of constant values $D(x)$. 
A \emph{valuation} (or assignment) $\theta$ over variables
$vars(\theta) \subseteq \VV$ maps each variable $x \in vars(\theta)$ to a value
$\theta(x)$.  A restriction of assignment $\theta$ to variables $V$,
$\theta|_V$, is the the assignment $\theta'$ over $V \cap vars(\theta)$ 
where $\theta'(v) = \theta(v)$.
A \emph{constraint} $c$ is a set of assignments over the variables $vars(c)$,
representing the solutions of the constraint. 
A constraint $c$ is 
\emph{monotonically increasing (resp. decreasing)} w.r.t. a variable $y \in \vars(c)$
if for all solutions $\theta$ that satisfy $c$, increasing (resp. decreasing) 
the value of $y$ also creates a solution, that is
$\theta'$ where $\theta'(y) > \theta(y)$ (resp. $\theta'(y) < \theta(y)$), and $\theta'(x) = \theta(x), x \in
vars(c) - \{y\}$, is also a solution of $c$. 
A \emph{constraint program (CP)}  is a collection of variables $\VV$ 
and constraints $C$ on those variables ($vars(c) \subseteq \VV, c \in C$). 
A \emph{positive-CP} $P$ is a CP where
each constraint is 
increasing in exactly one variable and decreasing in the rest. 
The \emph{minimal}
solution of a positive-CP is an assignment $\theta$ that satisfies $P$ s.t. there is no
other assignment $\theta'$ that also satisfies $P$ and there exists 
a variable $v$ for which
$\theta'(v) < \theta(v)$. Note that for Booleans, $\true > \false$.
A positive-CP $P$ always has a unique minimal solution. 
If we have bounds consistent propagators for all the constraints in the program, then this unique minimal solution can be found simply by performing bounds propagation on all constraints until a fixed point is reached, and then setting all variables to their lowest values.

\ignore{ which can be constructed as follows:
start by setting all variables to their minimum values, continue to loop over all constraints and
if a constraint has an increasing variable, increase its value to to the minimum value necessary
to satisfy the constraint, until a fixed point is reached.
\rehan{Please see if this is correct and clear, because it is used in proofs.}}


A \emph{normal logic program} $P$ is a collection of \emph{rules} of the form: 
$b_0 \leftarrow b_1 \wedge \ldots \wedge b_n \wedge \neg b'_1 \wedge \ldots \wedge \neg b'_m$
where $\{b_0,b_1,\ldots, b_n,b'_1, \ldots, b'_m\}$ are Boolean variables. 
$b_0$ is the \emph{head} of the rule while the RHS of
the reverse implication is the \emph{body} of the rule. 
A rule without any negative literals is a \emph{positive rule}.
A \emph{positive program} is a collection of positive rules.  The
\emph{least model} of a positive program is an assignment $\theta$ that
assigns $\true$ to the minimum number of variables.  The \emph{reduct} of $P$
w.r.t. an assignment $\theta$ is written $P^\theta$ and is a positive program
obtained by transforming each
rule $r$ of $P$ as follows: if there exists an $i$ for which $\theta(b'_i) =
\true$, discard the rule, otherwise, discard all negative literals $\{b'_1, \ldots, b'_m\}$ from
the rule.  The stable models of $P$ are all assignments $\theta$ for which
the least model of $P^\theta$ is equal to $\theta$.  Note that if we
consider a logic program as a constraint program, then a positive program is
a positive-CP and the least model of that program is equivalent to the minimal
solution defined above.

\subsection{Bound Founded Answer Set Programs (BFASP)}

BFASP is an extension of ASP that extends its semantics over integer and real variables.
In BFASP, the set of variables 
is a union of two disjoint sets: standard $\ANV$ and \emph{founded} 
variables $\AFV$.\footnote{For the rest of this paper we only consider \emph{lower bound}
  founded variables, analogous to founded Booleans. Upper bound founded
  variables can be implemented as negated lower bound founded
  variables, e.g. replace $sp[x,y]$ in the Road Construction
  example by $-nsp[x,y]$ where $nsp[x,y]$ is lower bound founded. }  
A rule $r$ is a pair $(c,y)$ where $c$ is a constraint, $y \in \AFV$ is the 
head of the rule and it is increasing in $c$.
A bound founded answer set program (BFASP) $P$ is a tuple $(\ANV, \AFV, C, R)$ where $C$ and $R$ are sets of constraints
and rules respectively (also accessed as $\constraints(P)$ and $\rules(P)$ resp.). Given a variable $y \in \AFV$, $\rules(y)$ is the set of rules with $y$ as their heads.
Each standard variable $s$ is associated with a lower and an upper bound, written $lb(s)$ and $ub(s)$ respectively.

The reduct of a BFASP $P$ w.r.t. an assignment $\theta$ is a 
positive-CP made from each rule $r = (c,y)$ by replacing in $c$
every variable $x \in vars(c) - \{y\}$ s.t. $x$ is a standard variable or $c$ is not decreasing in $x$, 
by its value $\theta(x)$ to create a positive-CP constraint $c'$. Let $\red{r}{\theta}$ denote this constraint.
If $\red{r}{\theta}$ is not a tautology, 
it is included in the reduct $\red{P}{\theta}$.
An assignment $\theta$ is a stable solution of $P$ iff i) it satisfies
all the constraints in $P$ and ii) it is the minimal solution
that satisfies $P^\theta$.
For a variable $y \in \AFV$, the
\emph{unconditionally justified bound} of $y$, written $\ujb(y)$, is a value that is unconditionally justified by the rules of the program 
regardless of what the standard variables are fixed to.
E.g. if we have a rule: $(y \geq 3 + x, y)$ where $x$ is a standard variable with 
domain $[0, 10]$, then we can set $\ujb(y) = 3$.
For any Boolean, we assume that $\ujb$ is fixed to $\false$.

\noexamples{
\begin{example}
Consider a BFASP with standard variable $s$, integer 
founded variables $a,b$, 
Boolean founded variables $x$ and $y$, and the rules:
$(a \geq 0, a)$, $(b \geq 0, b)$, $(a \geq b + s, a)$,
$(b \geq 8 \leftarrow x, b)$, $(x \leftarrow \neg y \wedge (a \geq 5), x)$.
Consider an assignment $\theta$ s.t. $\theta(x) = \true$, $\theta(y) = \false$, $\theta(b) = 8$,
$\theta(s) = 9$ and $\theta(a) = 17$. The reduct of $\theta$ is the positive-CP: $a \geq 0$, $b \geq 0$, $a \geq b + 9$, $b \geq 8 \leftarrow x$,
$x \leftarrow a \geq 5$. The minimal solution that satisfies the reduct is equal to $\theta$, therefore, $\theta$ is a stable
solution of the program. Consider another assignment $\theta'$ where all values are the same as in $\theta$, but
$\theta'(s) = 3$. Then, $P^{\theta'}$ is the positive-CP: $a \geq 0$, $b \geq 0$, $a \geq b + 3$, $b \geq 8 \leftarrow x$, $x \leftarrow a \geq 5$.
The minimal solution that satisfies this positive-CP is $M$ where $M(a) = 3$, $M(b) = 0$, $M(x) = M(y) = \false$.
Therefore, $\theta'$ is not a stable solution of the program.
\end{example}
}

The focus of this paper is BFASPs where every rule is written in the form
$(y \geq f(x_1,\ldots,x_n),y)$. 
Recall that we consider the domains of Boolean variables to be ordered such that $\true > \false$. So for example, 
an ASP rule such as $a \leftarrow b \wedge c$ can equivalently be written as: $a \geq f(b, c)$ where $f$ is a Boolean that returns the value of $b \wedge c$. 
$f(x_1,\ldots, x_n)$ is essentially an expression tree where the leaf nodes are the
variables $x_1, \ldots, x_n$. 

\noexamples{
\begin{example}
\label{ex:ast}
The function $f(x_1, \ldots, x_5) = x_1 + min(x_2, x_3 - x_4) - (x_5)^2$ can be described by the tree given below.

\begin{displaymath} \scriptsize
\xymatrixrowsep{0.5pc} \xymatrix{ 
	  & \sm \ar@{-}[dl] \ar@{-}[d] \ar@{-}[dr] &                                  &     \\
x_1 & \mn \ar@{-}[dl] \ar@{-}[d]             & - \ar@{-}[d]                     &     \\
x_2 & \sm \ar@{-}[dl] \ar@{-}[d]             & product \ar@{-}[d] \ar@{-}[dr] &     \\
x_3 & - \ar@{-}[d]                           & x_5                              & x_5   \\
    & x_4                                    &                                  &
}
\end{displaymath}
\end{example}
}

The \emph{local dependency graph} for a BFASP $P$ is defined over founded variables. For each rule 
$r = (y \geq f(x_1, \ldots, x_n), y)$,  there is an edge
from $y$ to all founded $x_i$. 
Each edge is marked increasing, decreasing, or non-monotonic, depending
on whether $f$ is increasing, decreasing, or non-monotonic in $x_i$.
A BFASP is \emph{locally valid} iff no edge within an SCC is marked non-monotonic. 
A program is \emph{locally stratified} if all the edges between any two nodes in the same component are marked increasing.
\noexamples{For example, if $x$ and $y$ are in the same SCC, then 
$y \geq sin(x_1)$ where $x_1$ has initial domain ($-\infty$, $\infty$) is not locally valid 
since the $sin$ function is not monotonic over this domain, but $y \geq sin(x_1)$ where $x_1$ has initial domain $[0, \pi / 2]$ is valid.}

\subsection{Non-ground BFASPs}

A \emph{non-ground BFASP} is a BFASP where sets of variables are grouped
together in variable arrays, and sets of ground rules are represented by
non-ground rules via universal quantification over index variables. For
example, if we have arrays of variables $a, b, c$, then we can represent the
ground rules: $(a[1] \geq b[1] + c[1], a[1]), (a[2] \geq b[2] + c[2], a[2]),
(a[3] \geq b[3] + c[3], a[3])$ by
$\forall i \in [1,3]: (a[i] \geq b[i] + c[i], a[i])$. Variables can be
grouped together in arrays of any dimension and non-ground BFASP rules have
the following form: $\forall \bar{i} \in \bar{D} \where con(\bar{i}):
(y[l_0(\bar{i})] \geq f(x_1[l_1(\bar{i})], \ldots, x_n[l_n(\bar{i})]),
y[l_0(\bar{i})])$, where $\bar{i}$ is a set of index variables $i_1, \ldots,
i_m$, $\bar{D}$ is a set of domains $D_1, \ldots, D_m$, $con$ is a \emph{constraint} over the index variables which
constrains these variables, $l_0, \ldots, l_n$ are functions over the index
variables which return a tuple of array indices, $y, x_1, \ldots, x_n$ are
arrays of variables and $f$ is a function over the $x_i$ variables.
Let $gen(r) \equiv \bar{i} \in \bar{D} \wedge con(\bar{i})$ denote the 
\emph{generator constraint} for a non-ground rule $r$.
Note that we require the generator constraint in each rule to constrain the
index variables so that $f$ is always defined. 


\noproofs{
An exception to this occurs when the function in the rule body
supports undefined arguments. For example, we can define $max(x_1, \ldots,
x_n)$ to be $-\infty$ if all the arguments are undefined and the max of the
ones which are defined when at least one is defined. Then the rule$\forall i
\in [1,10] (a[i] \geq max(b[i+1], c[i+2]), a[i])$ is valid since when
$b[i+1]$ or $c[i+2]$ refer to a variable that does not exist, we simply
consider that as an undefined argument and the function is still defined so
the rule is valid.
}

\newcommand{\rch}{\mathit{rch}}
\newcommand{\node}{\mathit{node}}

Variable arrays can contain either founded variables, standard variables, or
parameters (which can simply be considered fixed standard variables),
although all variables in a variable array must be of the same type. 
Note that the array names in our notation correspond to predicate names in
standard ASP syntax, and our index variables correspond to ASP ``local
variables.'' 
Given a non-ground rule $r$, let $\ground(r)$ be the set of ground rules obtained by substituting all possible values of the index variables that satisfy $gen(r)$ into the quantified expression. Similarly given a non-ground BFASP $P$, let $\ground(P)$ be the grounded BFASP that contains the grounding of all its rules and constraints. 
The \emph{predicate dependency graph}, validity and stratification are
defined similarly for array variables and non-ground rules as the local
dependency graph, local validity and local stratification respectively are defined for ground variables and ground rules.
All our subsequent discussion is restricted to valid BFASPs.

\section{Flattening}

A ground BFASP may contain constraints and 
rules whose expressions are not \emph{flat}, i.e.,
they are expression trees with height greater than one. 
Such expressions are not supported by constraint solvers and
we need to flatten these expressions to primitive forms.
We omit consideration of flattening constraints since this is the same as in
standard CP~\cite{mznfn}.
\noexamples{
Consider the expression tree in Example \ref{ex:ast}, 
if it were a constraint, 
we would introduce variables $i_1,\ldots,i_5$ to decompose the 
given function into the following set of equalities: $f = x_1 + i_1 + i_2,
i_1 = min(x_2, i_3), i_3 = x_3 + i_4, i_4 = -x_4, i_2 = -i_5, i_5 = x_5
\times x_5$.
} 
It can be shown that the standard CP flattening approach in which a subexpression is replaced with a standard variable and a constraint is added that equates the introduced variable with the subexpression, does \emph{not} preserve stable model semantics. 
\noexamples{
\begin{example}
\label{ex:flat}
Consider a BFASP with rules: $(x_1 \geq max(x_2, x_3) - 2, x_1), (x_2 \geq x_1 + 1, x_2)$, $(x_3 \geq x_1 + 2, x_3)$, $(x_1 \geq 3, x_1)$ where $x_1, x_2, x_3$ are all founded variables. The only stable solution of this program is $x_1 = 3, x_2 = 4, x_3 = 5$. Suppose we introduced a standard variable $i_1$ to represent the subexpression $max(x_2, x_3)$, so that the first rule in the program is replaced by: $(x_1 \geq i_1 - 2, x_1)$ and $i_1 = max(x_2, x_3)$. Now, due to the introduction of the standard variable $i_1$, the new program has many new spurious stable solutions such as $i_1 = 6, x_1 = 4, x_2 = 5, x_3 = 6$.
\end{example}
}
To preserve the stable model semantics, it is necessary to use introduced \emph{founded} variables to represent subexpressions containing founded variables. 
We now describe the central result used in our flattening algorithm.

\newcommand{\thmflattening}{
Let $P$ be a BFASP containing a rule $r = (y \geq f_1(x_1, \ldots, x_k, f_2(x_{k+1}, \ldots, x_n)), y)$ where $f_1$ is increasing in the argument where $f_2$ appears, and where if a variable occurs among both $x_1, \ldots, x_k$ and $x_{k+1}, \ldots, x_n$, then $f_1$ and $f_2$ have the same monotonicity w.r.t. it. Let $P'$ be $P$ with $r$ replaced by the two rules:
$r_1 = (y \geq f_1(x_1, \ldots, x_k, y'), y)$ and $r_2 = (y' \geq f_2(x_{k+1}, \ldots, x_n), y')$ 
where $y'$ is an introduced founded variable. Then the stable solutions of $P'$ restricted to the variables of $P$ are equivalent to the stable solutions of $P$.
}
\begin{theorem}
\label{thm:flattening}
\thmflattening
\end{theorem}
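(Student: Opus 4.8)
The plan is to establish a bijection between the stable solutions of $P$ and those of $P'$ by extending each stable solution $\theta$ of $P$ to the assignment $\theta'$ that agrees with $\theta$ on $\vars(P)$ and sets $\theta'(y') = f_2(\theta(x_{k+1}), \ldots, \theta(x_n))$, and conversely by restriction. The first thing I would prove is that in any stable solution of $P'$ the introduced variable is pinned: since $y'$ is founded and $r_2$ is the only rule with $y'$ as head, in the minimal solution of $\red{(P')}{\theta'}$ the variable $y'$ is pushed up only by the reduced constraint of $r_2$, so minimality forces $\theta'(y') = f_2(\theta'(x_{k+1}), \ldots, \theta'(x_n))$. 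This is exactly the property that fails when a \emph{standard} variable is used in place of $y'$, as Example~\ref{ex:flat} illustrates, so it is where foundedness is essential. Given this pinning, the satisfaction half is routine: substituting $y' = f_2(\ldots)$ into $r_1$ recovers precisely the constraint of $r$, so $\theta'$ satisfies $P'$ iff $\theta$ satisfies $P$.

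The core of the argument is to relate the minimal solution of $\red{(P')}{\theta'}$ to that of $\red{P}{\theta}$, and here I would first analyse how the reduct classifies each variable as \emph{kept} or \emph{fixed}. The reduct fixes a variable in a rule exactly when the rule's constraint is not decreasing in it, and keeps it otherwise. Because $f_1$ is increasing in the slot occupied by $f_2$, the monotonicity of the composite $y \geq f_1(\ldots, f_2(\ldots))$ with respect to every original argument coincides with the monotonicity of the corresponding single-level rule ($r_1$ for the direct arguments, $r_2$ for the $f_2$-arguments); the hypothesis that $f_1$ and $f_2$ agree in monotonicity on any shared variable is exactly what is needed so that such a variable is classified identically in $r_1$, $r_2$, and $r$. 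Consequently $\red{(P')}{\theta'}$ and $\red{P}{\theta}$ differ only in that the single reduced constraint of $r$ is replaced by the two reduced constraints of $r_1$ and $r_2$ together with the extra variable $y'$. The one delicate point is that the reduct \emph{freezes} $y'$ to the constant $\theta'(y')$ inside $r_1$ (since $r_1$ is increasing in $y'$), while $y'$ survives as a genuine variable only as the head of $r_2$.

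With this in place I would prove minimality in both directions by a monotone-substitution argument. For the forward direction, suppose the least solution $\mu'$ of $\red{(P')}{\theta'}$ were strictly below $\theta'$; the reduced constraint of $r_2$ forces $\mu'(y')$ to be at least $f_2$ evaluated at $\mu'$, so the strict decrease cannot be confined to $y'$ alone, and restricting $\mu'$ to $\vars(P)$ yields $\mu \le \theta$ with a strict drop somewhere in $\vars(P)$. Using that $f_2$ is increasing in its kept arguments and $f_1$ is increasing in the $f_2$-slot, one shows $f_1(\ldots, \theta'(y')) \geq f_1(\ldots, f_2(\ldots))$ evaluated at $\mu$, whence $\mu$ satisfies the reduced constraint of $r$ and thus all of $\red{P}{\theta}$, contradicting minimality of $\theta$. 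The converse direction extends a hypothetical smaller solution $\mu$ of $\red{P}{\theta}$ by $\mu'(y') = f_2(\mu(\ldots))$ and runs the same inequalities in reverse to contradict minimality of $\theta'$.

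I expect the main obstacle to be precisely the reconciliation of the frozen copy of $y'$ in $r_1$ with the live copy that heads $r_2$. The two are equal only at the fixed point, because a stable solution must coincide with the minimal solution of its own reduct; so the monotonicity inequalities have to be arranged so that the frozen constant $\theta'(y')$ can be replaced by the live value $f_2(\ldots)$ without reversing the direction of any bound. Getting the bookkeeping right for variables shared between $f_1$ and $f_2$ — in particular confirming that the kept arguments of $f_2$ are exactly those in which $f_2$ is increasing, so that the required monotonicity of the composite holds — is the step that needs the most care.
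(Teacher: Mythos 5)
Your overall strategy mirrors the paper's: extend $\theta$ to $\theta'$ via $\theta'(y') = f_2(\theta(x_{k+1}),\ldots,\theta(x_n))$, show that the constraint of $r$ is the projection over $y'$ of the constraints of $r_1$ and $r_2$, establish the analogous relation between the reducts, and transfer minimality in both directions. However, your self-identified ``delicate point'' contains a genuine error. You claim the reduct \emph{freezes} $y'$ to the constant $\theta'(y')$ inside $r_1$ ``since $r_1$ is increasing in $y'$.'' This inverts the reduct convention: a non-head variable $x$ is replaced by $\theta(x)$ exactly when $x$ is standard or the rule's \emph{constraint} is not decreasing in $x$. Since $f_1$ is increasing in the argument where $f_2$ appears, the constraint $y \geq f_1(x_1,\ldots,x_k,y')$ is \emph{decreasing} in $y'$, and $y'$ is founded, so $y'$ is \emph{kept as a live variable} in $r_1^{\theta'}$ --- the paper states this explicitly, and it is the entire point of making $y'$ founded rather than standard. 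Note also that the freezing claim contradicts your own earlier (correct) sentence that the two reducts ``differ only in that the single reduced constraint of $r$ is replaced by the two reduced constraints of $r_1$ and $r_2$ together with the extra variable $y'$'': projecting out $y'$ only makes sense if $y'$ survives as a variable.

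The error is fatal to your converse direction, not merely cosmetic. If $y'$ really were frozen in $r_1^{\theta'}$, the introduced variable would behave exactly like the standard variable of Example~\ref{ex:flat} and the theorem would be false: for that program, the assignment $x_1=4, x_2=5, x_3=6, y'=6$ is a fixed point of the ``frozen'' reduct, i.e.\ a spurious stable solution whose restriction is not stable for $P$. Concretely, your plan to ``run the same inequalities in reverse'' cannot be carried out: to show that a hypothetical smaller solution $\mu$ of $P^{\theta}$ extends to a smaller solution of $P'^{\theta'}$, you would need $\mu(y) \geq f_1(\ldots,\theta'(y'))$, but what $\mu$ gives you is $\mu(y) \geq f_1(\ldots,f_2(\mu(x_{k+1}),\ldots,\mu(x_n)))$, and since $f_2(\mu(x_{k+1}),\ldots,\mu(x_n)) \leq \theta'(y')$ and $f_1$ is increasing in that slot, this bound points the wrong way. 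With the correct reading --- $y'$ live in $r_1^{\theta'}$ --- one simply extends $\mu$ by $\mu(y') := f_2(\mu(x_{k+1}),\ldots,\mu(x_n))$ and both reduced constraints are satisfied, which is how the paper closes the argument. (Your forward direction happens to survive the misreading, since freezing only strengthens $r_1^{\theta'}$, but the converse, and hence the equivalence, does not.)
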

As a corollary, if $f_1$ is decreasing in the argument where $f_2$ appears, we can replace $f_2$ by a founded variable $-y'$ and add the rule $(y' \geq -f_2(x_k, \ldots, x_n), y')$ instead. 
Not all valid rule forms are supported by Theorem~\ref{thm:flattening}, because we require that multiple occurrences of the same variable in the expression must have the same monotonicity w.r.t. the root expression.
Note that if a subexpression does not contain any founded variables at all, i.e., only contains standard variables, parameters or constants, then a standard CP flattening step is sufficient.
Let us now describe our flattening algorithm $\textsf{flat}$ for ground BFASPs and later extend it to non-ground BFASPs.
We put all the rules and constraints of the program in sets $R$ and $T$ respectively. For every rule $r = (y \geq f(e_1, \ldots, e_n), y) \in R$, where $f$ is the top level function in that rule, and $e_1, \ldots, e_n$ are the expressions which form $f$'s arguments, we call $\textsf{flatRule}$ which works as follows. If there is some $e_i$ which is not a terminal, i.e., not a constant, parameter or variable, then we have two cases. If $e_i$ does not contain any founded variables, we simply replace it with standard variable $y'$ and add the constraint $y' = e_i$ to $T$. Otherwise, we apply the transformation described in Theorem~\ref{thm:flattening}. 
After $\textsf{flatRule}$, we simplify $r$ as much as possible through the subroutine \textsf{simplify}, e.g., by getting rid of double negations, pushing negations inside the expressions as much as possible etc. 
Finally, we flatten all the constraints in $T$ using the standard CP flattening algorithm \textsf{cp\_flat} as described in~\cite{mznfn}. 
Since we replace all decreasing subexpressions by negated introduced variables and simplify expressions by pushing negations towards the variables, 
we handle negation through simple rule forms like $(y \geq -x, y)$, 
\noexamples{$(y \geq \frac{1}{x}, y)$,} $(y \geq \neg x, y)$ etc.

\noexamples{
\begin{example}
Consider the rule: $(y \geq x_1 + min(x_2, x_3 - x_4) - (x_5)^2,
y)$ where $x_1, x_2, x_5$ are founded and $x_3, x_4$ are standard variables. 
Using our flattening algorithm, we can break the rule into:
$(y \geq x_1 + i_1 + i_2, y)$, $(i_1 \geq min(x_2, x_3 - x_4), i_1)$, $(i_2 \geq -(x_5)^2, i_2)$ where $i1,i2$ are founded variables. The rule $(i_1 \geq min(x_2, x_3 - x_4), i_1)$
is further flattened to $(i_1 \geq min(x_2, i_3), i_1)$ and a constraint $i_3 = x_3 - x_4$ where $i_3$ is a standard variable.
\end{example} 
}

\begin{figure}[t]
\scriptsize
\raggedright
\begin{minipage}[t]{0.5\textwidth}
\begin{tabbing}
xx \= xx \= xx \= xx \= xx \= xx \= xx \= \kill
\textsf{flat($P$)} \\
\> $P_{flat}$ := $\emptyset$ \\
\> $R$ := $\rules(P)$ \\
\> $T$ := $\constraints(P)$ \\
\> \bfor ($r \in R$) \\
\> \> $R$ := $R \setminus \{r\}$ \\
\> \> \textsf{flatRule($r,R,T$)} \\
\> \> $r$ := \textsf{simplify($r$)} \\
\> \> $P_{flat} \cupeq \{r\}$ \\
\> \bfor ($c \in T$) $P_{flat} \cupeq $ \textsf{cp\_flat}($c$) \\
\> \breturn{$P_{flat}$} \\
\end{tabbing}
\end{minipage}
\begin{minipage}[t]{0.5\textwidth}
\begin{tabbing}
xx \= xx \= xx \= xx \= xx \= xx \= xx \= \kill
\textsf{flatRule($r = (y \geq f(e_1, \ldots, e_n), y), R, T$)} \\
\> \bfor(each non-terminal $e_i$) \\
\> \> \bif($e_i$ does not contain founded vars) \\
\> \> \> replace $e_i$ with standard var $y'$ in $r$ \\
\> \> \> $T \cupeq \{y' = e_i\}$ \\
\> \> \belseif($f$ is increasing in $e_i$) \\
\> \> \> replace $e_i$ with founded var $y'$ in $r$ \\
\> \> \> $R \cupeq \{(y' \geq e_i, y')\}$ \\
\> \> \belseif($f$ is decreasing in $e_i$) \\
\> \> \> replace $e_i$ with founded var $-y'$ in $r$ \\
\> \> \> $R \cupeq \{(y' \geq -e_i, y')\}$ \\
\end{tabbing}
\end{minipage}
\vspace*{-5mm}
\end{figure}

The algorithm can be extended to non-ground rules by defining the index set of the introduced variables
to be equal to the domain of index variables as given in the generator of the rule in which they
replace an expression. Moreover, the generator expression of an intermediate rule stays the same as that of
the original rule from which it is derived. 


\section{Grounding}
\label{sec:grounding}

ASP grounders keep track of variables that have been created and instantiate further rules based on that. 
For example, if the variables $b$ and $c$ have been created, then the rule $a \leftarrow b \wedge c$
justifies a bound on $a$ and therefore, must be included in the final program. The justification of
all positive literals in a rule potentially justify its head. However, for a rule, if any one positive variable
in its body does not have any rule supporting it, then that rule can safely be ignored until a justification for that variable
has been found. In case a justification is never found for that variable, then the rule is \emph{useless}, i.e.,
excluding the rule from the program does not change its stable solutions.

We propose a simple grounding algorithm for non-ground BFASPs
which can be implemented by simply maintaining a set of ground rules and variables as done in ASP grounders,
but which may generate useless rules in addition to all the useful ones. The idea is that for each 
variable $v$, we only keep track of whether $v$ can potentially be justified above its $\ujb$ value, 
rather than keeping track of whether it can be justified above each value in its domain. If it can be justified above its $\ujb$, 
then when $v$ appears in the body of a rule, we assume that $v$ can be justified to any possible bound for the purpose of calculating what 
bound can be justified on the head. 
\noexamples{This clearly over-estimates the bounds which can be justified on the variables, and thus the algorithm 
generates all the useful rules and possibly some useless ones.}

We refer to a variable $x$ as being \emph{created}, written $\created(x)$, if it can go above its $\ujb$
value. More formally, $\created(x)$ is a founded Boolean with a rule:
$\created(x) \leftarrow x > \ujb(x)$.
While that is how we define $\created(x)$, we do not explicitly have a variable
$\created(x)$ or the above rule in our implementation. 
Instead, we implement it by maintaining a set $Q$ of variables that have been created. Initially, $Q$ is empty. We recursively
look at each non-ground rule to see if the newly created variables make it
possible for more head variables to be justified above their $\ujb$
values. If so, we create those variables and add them to $Q$. In order to do
this, we need to find necessary conditions under which the head variable can
be justified above its $\ujb$. In order to simplify the presentation, we are
going to define $\ujb$ for constants, standard variables and parameters as
well. For a constant $x$, we define $\ujb(x)$ to be the value of $x$. For
parameters and 
standard variables $x$, we define $\ujb(x) = ub(x)$.\footnote{Upper and lower bounds for a parametric array can be established by simply parsing the array.}
Note that for soundness, the $\ujb$ values of founded variables only have to be correct (e.g. $\minf$ for all variables) although tighter $\ujb$
values can improve the efficiency of our algorithm. 
Table \ref{table:cond} gives a non-exhaustive list of necessary conditions for the head variable to be justified above its $\ujb$ value for different rule forms.

\begin{table} \scriptsize
\begin{tabular}{p{0.4\linewidth}|p{0.55\linewidth}}
$c$ & $\phi_r$ \\
\hline
$y \geq \sm(x_1, \ldots, x_n)$ & $(\sum_i \ujb(x_i) > \ujb(y)) \vee$ \\ 
& $((\wedge_i \ujb(x_i) > -\infty \vee \created(x_i)) \wedge (\vee_i \created(x_i)))$ \\
$y \geq \mx(x_1, \ldots, x_n)$ & $\vee_i (\ujb(x_i) > \ujb(y) \vee \created(x_i))$ \\
$y \geq \mn(x_1, \ldots, x_n)$ & $\wedge_i (\ujb(x_i) > \ujb(y) \vee \created(x_i))$ \\
$y \geq product(x_1, \ldots, x_n)$ where $\wedge_i x_i > 0$ & $\prod_i \ujb(x_i) > \ujb(y) \vee (\vee_i \created(x_i)))$ \\
$y \geq x \leftarrow r$  & $\created(r) \wedge (\ujb(x) > \ujb(y) \vee \created(x))$ \\
$y \leftarrow x \geq 0$ & $\ujb(x) \geq 0 \vee \created(x)$ \\
$y \leftarrow \wedge_{i} x_i$ & $\wedge_i \created(x_i)$ \\
$y \leftarrow \vee_{i} x_i$ & $\vee_i \created(x_i)$ \\
$y \geq -x$ & $-ub(x) > \ujb(y)$ \\
$y \leftarrow \neg x$ & $\true$ \\
$y \geq 1/x$ where $x > 0$ & $1/-ub(x) > \ujb(y)$ \\
\end{tabular}\caption{\textsl{Grounding conditions for rule $r = (c,y)$}}
\label{table:cond}
\end{table}

Let us now make a few observations about the conditions given in Table \ref{table:cond}. A key point is that for many
rule forms $\phi_r$ can evaluate to $\true$, even without any variable in the body getting created.
All such rules that evaluate to true give us a starting point for initializing $Q$ in our implementation. 
The linear case ($\sm$) deserves some explanation. It is made up of two disjuncts, the first of which is an evaluation of
the initial condition, i.e., whether the sum of $\ujb$ values of all variables is greater than the $\ujb$ of 
the head. If this condition is true, then the rule needs to be grounded unconditionally. If this is false, then the second
disjunct becomes important. The second disjunct itself is a conjunction of two more conditions. The first one says
that all variables must be greater than $\minf$ in order for the rule to justify a finite value on the head.
In the case where all variables already have a finite $\ujb$, the second conjunct says
that at least one of them must be created for the rule to be grounded 
\noexamples{(given the initial condition failed)}
. 
Finally, observe that after plugging all values of $\ujb$, all conditions given in the table simplify to one 
of the following four forms: $\true$, $\false$, $\vee_i \created(x_i)$ or $\wedge_i \created(x_i)$.
Note that the grounding conditions are significantly more sophisticated than the simple conjunctive condition for 
normal rules. More specifically, after simplification, we can get a disjunctive condition which has no analog in ASP.

\noexamples{
\begin{example}
Consider a BFASP with the following two non-ground rules:$\forall i \in [1,10]: a[i] \geq b[i] + x[i]$ and
$\forall i \in [1,10]: x[i] \geq \mn(c[i],d[i])$.
Say $\ujb(a) =5$, $\ujb(b)=2$, $\ujb(c)=7$, $\ujb(d)=1$
and $\ujb(x) = 1$. For the first rule, the initial condition evaluates to false. Moreover, since both $b$ and $x$ have $\ujb$
greater than $\minf$, we get $\created(b[i]) \vee \created(x[i])$.
For the second rule, since $\ujb(c[i]) > \ujb(x[i])$ and $\ujb(d[i])$ is not greater than $\ujb(x[i])$, we get the condition: $\created(d[i])$.
\end{example}
}

\begin{figure}[t]
\scriptsize
\raggedright
\begin{minipage}[t]{0.4\textwidth}
\begin{tabbing}
xx \= xx \= xx \= xx \= xx \= xx \= xx \= xx \= xx \= xx \= \kill
\textsf{createCPs($P$)} \\
\> \bfor ($r \in \rules(P) : \phi_r = \bigwedge\limits_{i=1}^n \created(x_i[\bar{l_i}])$) \\
\> \> $cp[r]$ := $\true$ \% new constraint program \\
\> \> $cp[r]$ := $cp[r] \wedge gen(r)$ \\
\> \> \bfor ($i \in 1 \ldots n$) \\
\> \> \> $set[r, i]$ := $\emptyset$ \\
\> \> \> $cp[r]$ := $cp[r] \wedge \bar{l_i} \in\,\ll\!\! set[r, i]\!\!\gg$ \\
\> \bfor ($r \in \rules(P) : \phi_r = \bigvee\limits_{i=1}^n \created(x_i[\bar{l_i}])$) \\
\> \> \bfor ($i \in 1 \ldots n$) \\
\> \> \> $cp[r, i]$ := $\true$ \% new constraint program \\
\> \> \> $cp[r, i]$ := $cp[r, i] \wedge gen(r)$ \\
\> \> \> $set[r, i]$ := $\emptyset$ \\
\> \> \> $cp[r, i]$ := $cp[r, i] \wedge \bar{l_i} \in\,\ll\!\! set[r, i]\!\!\gg$\\
\end{tabbing}
\end{minipage}
\begin{minipage}[t]{0.5\textwidth}
\begin{tabbing}
xx \= xx \= xx \= xx \= xx \= xx \= xx \= \kill
\textsf{ground($P$)} \\
\> $C := \{ \textsf{groundAll($c$)} : c \in \constraints(P) \}$ \\
\> $R' := \{ \textsf{groundAll($r$)} : r \in \rules(P) : \phi_r = \true \}$ \\
\> \bwhile ($R' \neq \emptyset$) \\
\> \> $H := \textsf{heads($R'$)}$ \\
\> \> $Q \cupeq H$ \\
\> \> $R' := \emptyset$ \\
\> \> \bfor ($r \in \rules(P) : H \cap \vars(\phi_r) \neq \emptyset$) \\
\> \> \> \bif ($\phi_r = \bigwedge\limits_{i=1}^n \created(x_i[\bar{l_i}]) \vee
	\phi_r = \bigvee\limits_{i=1}^n \created(x_i[\bar{l_i}])$) \\
\> \> \> \> \bfor ($i \in 1 \ldots n$) \\
\> \> \> \> \> $dom := \{\bar{m} \mid x[\bar{m}] \in Q\}$ \\
\> \> \> \> \> $set[r, i]$ := $dom \setminus set[r, i]$ \\
\> \> \> \> \> \bif ($\phi_r$ is conj) $R' \cupeq \textsf{search}($cp[r]$) \setminus R$ \\
\> \> \> \> \> \bif ($\phi_r$ is disj) $R' \cupeq \textsf{search}($cp[r,i]$) \setminus R$ \\
\> \> \> \> \> $R \cupeq R'$ \\
\> \> \> \> \> $set[r, i]$ := $dom$ \\
\> \bfor ($y \in \vars(R) \cap \AFV$) $R \cupeq (y \geq \ujb(y), y)$
\end{tabbing}
\end{minipage}
\vspace*{-5mm}
\end{figure}

We are now ready to present the main bottom-up grounding algorithm.
Logically, our grounding algorithm starts with $\ujb(x)$ for all $x$, adds $(x \geq ujb(x), x)$ to the program
and then finds all the ground rules that are not made redundant by these rules. 
\textsf{createCPs} is a preprocessing step that
creates constraint programs for rules in a BFASP $P$ 
whose conditions are either conjunctions or disjunctions.
For a rule with a conjunctive condition, 
it only creates one program, while for one with a disjunctive
condition, it creates one constraint program for 
each variable in the condition. 
Each program is initialized with the $\gen(r)$ which defines the variables
and some initial constraints given in the where clause in the generator of non-ground rule.
Furthermore, for each array literal in $\phi_r$, a constraint is posted
on its literal (which is a function of index variables in the rule), 
to be in the domain given by the \emph{current} 
value of the $\set$ variable (the reason
for the Quine quotes) which is initially set to empty.
\textsf{ground} is called after preprocessing.
$Q$ and $R$ are sets of ground variables and rules respectively. 
\textsf{groundAll} is a function that grounds a non-ground rule or constraint completely,
and returns the set of all rules and constraints respectively. Initially, we ground all
constraints in $P$ and rules for which $\phi_r$ evaluates to true. $R'$ is a temporary
variable that represents the set of new ground rules from the last iteration.
In each iteration, we only look for non-ground rules that have some variable in their
conditions that is created in the previous iteration.
\textsf{heads} takes a set of ground rules as its input and returns their heads.
In each iteration, through $Q$, we manipulate the $\set$ constraint to get new rule instantiations. 
For each variable in the clause, we make $\set$ equal to the new index values created for that
variable. For both the conjunctive and the disjunctive
case, this optimization only tries out new values of recently created variables to instantiate
new rules. 
\textsf{search} takes a constraint program as its input, finds all its solutions,
instantiates the non-ground rule for each solution, and returns the set of these ground rules.
After creating new rules due to the new values in $\set$, we make it equal
to all values of the variable in $Q$. The fixed point calculation stops when
no new rules are created. Finally, for every founded variable $y$, we add $(y \geq \ujb(y), y)$
as a rule so that if the $\ujb$ relied on some rules that were ignored during grounding,
then this ensures that $\ujb(y)$ is always justified.

\section{Magic set transformation}

Let us first define the \emph{query} of a BFASP.
To build the query $Q$ for a BFASP $P$, 
we ground all its constraints and its objective function,
and put all the variables that appear in them in $Q$.\footnote{Technically if
  the problem has output variables, whose value will be printed, they too need
  to be added to $Q$.}
Note that our query does not have any 
free variables and only contains ground variables. 
Therefore, 
we do not need adornment strings to propagate binding information as in the original magic set technique.
The original magic set technique has three stages: adorn, generate and modify. For the reason described above,
we only describe the latter two. 

The purpose of the magic set technique is to simulate a top-down computation
through bottom-up grounding.  
For every variable $a$ in the original program, we create a \emph{magic variable} $m\_a$ that represents whether we care about $a$.
Additionally, there are \emph{magic rules} that specify when a magic variable should be created.
Consider a simple rule $(a \geq b + c, a)$ where $\ujb$ of all variables is equal to $\minf$.
Suppose we are interested in computing $a$, we model this by setting $m\_a$ to true.
Since $b$ is required to compute the value of $a$, we add a magic rule $m\_b \leftarrow m\_a$.
We do not care about $c$ until a finite bound on $b$ is justified (until $b$ is created),
so we generate a tighter magic rule for $c$: $m\_c \leftarrow m\_a \wedge \cre(b)$.

We can utilize the necessary conditions for a useful grounding of a rule $r$ as given by $\phi_r$.
Recall that after evaluating the initial conditions, $\phi_r$ reduces to true, false, a conjunction or a disjunction. 
The above generation of magic rules for the rule $(a \geq b + c, a)$ is an example of the conjunctive case.
For a disjunction, the magic rules are even simpler. 
For every $\created(x)$ in the disjunction, we create the magic rule $m\_x \leftarrow m\_a$. 
Note that not all variables in the original rule appear in the condition;
some might get removed in the simplification or not be included in the original condition at all. We can
ignore them for grounding, but we are interested in their values as soon as we know that the rule can be
useful. Therefore, as soon as the magic variable for the head is created, and $\phi_r$ is satisfied,
we are interested in all the variables in the rule that do not appear in $\phi_r$.
Finally, we define the modification step for a rule $r = (y \geq f(\bar{x}), y)$, written \textsf{modify($r$)},
as changing it to $r = (y \geq f(\bar{x}) \leftarrow m_y, y)$.
The pseudo-code for generation of magic rules and modification of the original rule is given as the function \textsf{magic} that takes a rule
as its input. It adds magic rules for a rule to a set $P$.
The first two if conditions handle the disjunctive and conjunctive case respectively. The \textbf{for} loop that
follows generates magic rules for variables that are not in $\phi_r$. 

The entire bottom-up calculation with magic sets is as follows.
First, create magic variables for all the variables in the program and
call \textsf{magic} for every rule in the program. If the magic rules
generated and/or the original rule after modification are not primitive expressions, flatten them. 
Then, call \textsf{ground} on the resulting program.
While grounding the constraints, build the query by including $m\_v$ in $Q$ for every ground variable $v$ 
that is in some ground constraint. After grounding, filter all the magic variables from $Q$, and magic rules from $R$.

\begin{figure}[t]
\scriptsize
\begin{tabbing}
xx \= xx \= xx \= xx \= xx \= xx \= xx \= \kill
\textsf{magic($r$)} \\
\> $a := \head(r)$ \\
\> \bif ($\phi_r = \bigvee\limits_{i=1}^n \cre(x_i)$) \\
\> \> \bfor ($i \in 1 \ldots n$) $P \cupeq \gen(r): (m\_x_i \leftarrow m\_a, m\_x_i)$ \\
\> \bif ($\phi_r = \bigwedge\limits_{i=1}^n \cre(x_i)$) \\
\> \> \bfor ($i \in 1 \ldots n$) \\
\> \> \> $b := m\_a$ \\
\> \> \> $P \cupeq \gen(r): (m\_x_i \leftarrow b, m\_x_i)$ \\
\> \> \> $b := b \wedge \cre(x_i)$ \\
\> \bfor ($v \in \vars(r) \setminus (\vars(\phi_r) \cup \{a\})$) \\
\> \> $P \cupeq \gen(r): (m\_v \leftarrow m\_a \wedge \phi_r, m\_v)$ \\
\> $\textsf{modify}(r)$
\end{tabbing}
\vspace*{-5mm}
\end{figure}

\noexamples{
\begin{example}
Consider a BFASP with the following rules:
$$
\begin{array}{llll}
R1 & \forall i \in [2,30] \text{ where } i \bmod 2 = 0 :           & R2 & \forall i \in [2,30]\text{ where } i \bmod 2 = 0 : \\
   & (a[i] \geq b[i-1] + y[i], a[i])                               && (y[i] \geq \mx(c[2i], d[i+1]), y[i]) \\
R3 & \forall i \in [1,10] : (c[i] \geq 10 \leftarrow s_1[i], c[i]) & R4 & \forall i \in [1,10] : (b[i] \geq s_2[i+1], b[i]) \\
\end{array}
$$
where $a,b,c,d,y$ are arrays of founded integers with $\ujb$ of $\minf$,
$s_2$ is an array of standard Booleans and $s_1$ is an array of
standard integers with domains $(\minf, \infty)$, and the
index set of all arrays is equal to $[1,100]$.
Let us compute $\phi_r$ for each rule.
$\phi_{R_1} = \created(b[i-1]) \wedge \created(y[i])$, 
$\phi_{R_2} = \created(c[2i]) \vee \created(d[i+1])$,
and $\phi_{R_3} = \phi_{R_4} = \true$.
We get the following magic rules (a rule $(m\_y \leftarrow body, m\_y)$ is written as $m\_y \leftarrow body$
for compactness):

$$
\begin{array}{llll}
M1 & \gen(R1) : m\_b[i-1] \leftarrow m\_a[i]  & M2 & \gen(R1) : m\_y[i] \leftarrow m\_a[i] \wedge \cre(b[i-1]) \\
M3 & \gen(R2) : m\_c[2i] \leftarrow m\_y[i]   & M4 & \gen(R2) : m\_d[i+1] \leftarrow m\_y[i] \\
M5 & \gen(R3) : m\_s_1[i] \leftarrow m\_c[i]  & M6 & \gen(R4) : m\_s_2[i+1] \leftarrow m\_b[i] \\
\end{array}
$$

Let us say we are given the constraint: $a[2] + a[5] \geq 10$. Processing this, we initialize $Q$ with the set $\{m\_a[2], m\_a[5] \}$.
Running \textsf{ground} procedure extends $Q$ with the following variables, the rule used to derived a variable is given
in brackets:
$m\_b[1] (M1)$, $m\_s_2[2] (M6)$, $b[1] (R4)$, $m\_y[2] (M2)$, $m\_c[4] (M3)$, 
$m\_d[3] (M4)$, $c[4] (R3)$, $m\_s_1[4] (M5)$, $y[2] (R2)$, $a[2] (R1)$.
Filtering magic rules, the following ground rules are generated during the grounding
(the $\ujb$ of variables that are not created are plugged in as constants in rules
where they appear):
$(a[2] \leftarrow b[1] + y[2], a[2])$, $(y[2] \geq \mx(c[4], \minf), y[2])$,
$(c[4] \geq 10 \leftarrow s_1[4], c[4])$ and $(b[1] \geq s_2[2], b[1])$.
It can be shown that the number of rules with exhaustive 
and bottom-up only (without magic sets) grounding is 48 and 26 respectively!
\end{example}
}

If a given BFASP program is unstratified, then the algorithm described above is not sound. There might be
parts of the program that are unreachable from the founded atoms appearing in the query but are inconsistent.
We refer the reader to \cite{magic_datalog_journal} for further details.
We overcome this 
by including in the query all ground magic variables of all array variables that are part of a component in
the dependency graph in which there is some decreasing (negative) edge between any two of its nodes.
The following result establishes correctness of our approach.

\newcommand{\thmgrounding}{
Given a BFASP $P$, let $G$ be equal to $\ground(P)$ and $M$ be a ground BFASP produced by running the magic set transformation after including
the unstratified parts of the program in the initial query for a given non-ground BFASP $P$.
The stable solutions of $G$ restricted to the variables $vars(M)$ are equivalent to the stable solutions of $M$. That is,
if $\theta'$ is a stable solution of $G$, then $\theta'|_{vars(M)}$ is a stable solution of $M$ and
if $\theta$ is a stable solution of $M$, then there exists $\theta'$ s.t. $\theta'$ is a stable solution of $G$ and $\theta'|_{vars(M)} = \theta$.
}
\begin{theorem}
\label{theorem:ground}
\thmgrounding
\end{theorem}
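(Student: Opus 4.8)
The plan is to prove the two inclusions separately, organising the whole argument around the observation that both \textsf{ground} and \textsf{magic} operate on top of a \emph{normal} logic program over the Boolean atoms $\created(v)$ and the magic atoms $m\_v$: once the $\ujb$ values are substituted, each condition $\phi_r$ of Table~\ref{table:cond} is a propositional formula over these atoms, and the rules emitted by \textsf{magic} are ordinary normal rules over them. Accordingly, I would first prove a \emph{syntactic} lemma: the ground rules of $M$ are exactly the ground rules of $G$ whose heads are relevant to the query in this $\created$/magic program, together with the closing rules $(y \geq \ujb(y), y)$ added at the end of \textsf{ground}, and $vars(M)$ is exactly the set of original-program variables $v$ for which $m\_v$ is created from the query. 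Here I can invoke the classical correctness of the magic-set transformation for normal programs (\cite{magic_datalog_journal,original_magic,power_magic}), since at the level of $\created$/magic atoms our transformation is the classical one; the only genuinely new ingredient is the disjunctive form of $\phi_r$, which I would discharge by checking that \textsf{magic} emits one magic rule per disjunct, mirroring the ``one constraint program per disjunct'' construction in \textsf{createCPs}, so that a disjunct's body variable becomes relevant exactly when the head is relevant.

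The semantic heart of the proof is a \emph{closure lemma}: writing $V = vars(M)$ for the relevant variables, the restriction to $V$ of the minimal model of $\red{G}{\theta'}$ is computed using rules whose heads and bodies lie entirely in $V$ (all standard and constraint variables being in $V$ already, since the query collects every variable of every ground constraint). The key step is that whenever a rule $r$ with a relevant head actually fires in the reduct, its condition $\phi_r$ holds, and the trailing loop of \textsf{magic} together with the conjunctive/disjunctive magic rules then forces every body variable of $r$ to be relevant as well; a rule whose $\phi_r$ fails contributes nothing beyond $\ujb$ and is therefore irrelevant to the minimal model. The subtle case is a dependency component containing a decreasing (negative) edge: there the founded minimal model cannot be propagated variable-by-variable along the relevance order, so relevance of one node of the component need not entail relevance of the others. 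This is precisely why the construction adds every array variable of any such component to the query; I would show that this makes the entire component relevant, so that it is solved as a block and the closure property is restored (and, at the same time, any inconsistency hidden in such a component is detected rather than silently dropped).

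Given the syntactic and closure lemmas, both directions of the equivalence follow. For \textbf{completeness}, let $\theta'$ be a stable solution of $G$ and put $\theta = \theta'|_{V}$. Every constraint of $M$ is a constraint of $G$ over query variables, hence over $V$, so $\theta$ satisfies them; and by the closure lemma $\theta$ is the minimal model of $\red{M}{\theta}$, because the rules of $M$ are exactly the relevant rules of $G$ plus the $\ujb$ rules. For \textbf{soundness}, let $\theta$ be a stable solution of $M$ and extend it to $\theta'$ on $vars(G)$ by taking, on the variables outside $V$, the minimal model of the remaining (positive-CP) reduct of $G$ induced by $\theta$; this minimal model exists and is unique by the positive-CP property recalled in the preliminaries. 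Then $\theta'$ satisfies all constraints of $G$ (again these live on $V$), and it is the minimal model of $\red{G}{\theta'}$: on $V$ it agrees with $\theta$, which is minimal there by closure and cannot be lowered by the extra rules, while outside $V$ it is minimal by construction. Hence $\theta'$ is a stable solution of $G$ with $\theta'|_{V} = \theta$.

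I expect the closure lemma of the second paragraph, and specifically the treatment of components with decreasing edges, to be the main obstacle. Matching the fixpoint that computes the founded minimal model of the reduct against the relevance order of the $\created$/magic program requires care: I must verify that ``$\phi_r$ holds in the reduct'' really does trigger the magic rules that make $r$'s body relevant, and that the substitution of $\ujb$ values and the disjunctive conditions preserve this correspondence. Establishing that including entire unstratified components in the query is simultaneously necessary (magic sets are unsound otherwise) and sufficient (closure is thereby recovered) is the delicate part, and is where the proof departs most from the classical Datalog argument.
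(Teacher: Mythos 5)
Your overall architecture matches the paper's: split $\ground(P)$ into $M$ and the residual part $P_i$, argue that rules whose condition $\phi_r$ never fires are useless and can be dropped, conclude that no variable of $M$ depends on any variable of $P_i$, and prove the two inclusions separately. Your completeness direction is essentially the paper's first argument: because of this one-way dependency, the minimal-model computation on $G^{\theta'}$ can be performed first on the rules over $vars(M)$ and then on the rest, so the restriction of a stable solution of $G$ is stable for $M$. That part of the proposal is sound.

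The gap is in your soundness direction. You extend $\theta$ to $\theta'$ by taking, on the variables outside $V$, ``the minimal model of the remaining (positive-CP) reduct of $G$ induced by $\theta$'', invoking the existence and uniqueness of minimal solutions of positive-CPs. But $P_i$ is not positive: the construction only forces into the query those components having an \emph{intra-component} decreasing edge, so $P_i$ may still contain decreasing (negative) edges \emph{between} distinct components --- it is stratified, not positive. Consequently the reduct of the rules of $P_i$ is not determined by $\theta$ alone: forming the reduct replaces every variable in which the rule's constraint is not decreasing --- in particular every founded variable on which a body depends negatively, and every standard variable --- by its value under the very assignment you are constructing, so your definition of $\theta'$ is circular. (Standard variables occurring only in $P_i$ raise the same problem; they must be fixed before any reduct of $P_i$ exists.) The paper repairs exactly this point: fix the unfixed standard variables arbitrarily, then use the fact that all intra-component edges of $P_i$ are increasing to compute the \emph{iterated least fixpoint} stratum by stratum --- the BFASP analog of the result that stratified programs have a unique stable model (Corollary 2 in \cite{stable}) --- and it is this iterated fixpoint, not a single positive-CP minimal-model computation, that yields the stable extension $\theta'$. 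Relatedly, you locate the role of adding unstratified components to the query inside your closure lemma, i.e., on the completeness side; its real purpose is on the soundness side: without it, $P_i$ could be unstratified and admit no stable extension at all (e.g., an unreachable odd loop), so $M$ could have stable solutions that $G$ does not.
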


\section{Experiments}

We show the benefits of bottom-up grounding and magic sets for computing
with BFASPs on a number of benchmarks: \textsl{RoadCon},  \textsl{UtilPol}
and \textsl{CompanyCon}.\footnote{All problem encodings and instances can be
  found at: \texttt{www.cs.mu.oz.au/\textasciitilde{}pjs/bound\_founded/}}
In utilitarian policies (\textsl{UtilPol}), a government decides
a set of policies to enact while minimizing the cost. Additionally, there are
different citizens and each citizen's happiness depends on the enacted policies and 
happiness of other citizens. There is a citizen $t$ whose happiness should be above a given value.
Company controls (\textsl{CompanyCon}) is a problem related to stock markets.
The parameters of the problem are the number of companies, each company's ownership
of stocks in other companies, and a source company that wants to \emph{control} a destination
company. The decision variables are the number of stocks that the source company buys
in every other company. A company $c$ controls a company $d$ if the number of stocks
that $c$ owns in $d$ plus the number of stocks that other companies that $c$ controls own
in $d$ is greater than 50 percent of total number of stocks of company $d$.
The objective is to minimize the total cost of stocks bought. All experiments were performed on a machine running Ubuntu 12.04.1 LTS
with 8 GB of physical memory and Intel(R) Core(TM) i7-2600 3.4 GHz processor. Our implementation 
extends MiniZinc 2.0 (\libmzn) and uses the solver \chuffed extended with
founded variables and rules as described in our previous work \cite{bfasp}.
Each time in the tables is the median time in seconds of 10 different instances.

\begin{table}[t] \scriptsize
\centering
\begin{tabular}{rc|rr|rr|rr}
\hline
&& \multicolumn{2}{c|}{Exhaustive} & \multicolumn{2}{c}{Bottom-up} & \multicolumn{2}{c}{Magic} \\
$N$ & SCCs & Flat  & Solve  &  Flat  &   Solve &   Flat  & Solve  \\[0.2ex]
\hline
100 & 5    & 4.25    & 3.34    & 1.37    & .64     & .27     & .04     \\ [0.5ex] 
300 & 15   & 39.02   & ---     & 4.19    & 1.25    & .41     & .07     \\ [0.5ex] 
600 & 20   & 237.97  & ---     & 19.70   & 22.56   & .83     & .96     \\ [0.5ex] 
900 & 30   & ---     & ---     & 30.44   & 127.90  & 1.17    & 4.74    \\ [0.5ex] 
1400 & 45  & ---     & ---     & 56.99   & 398.29  & 1.79    & 25.66   \\ [0.5ex] 

\hline \\[0.2ex] 
\end{tabular}
\caption{Road Construction \textsl{RoadCon}}
\label{table:road}
\end{table}

\begin{table}[t]
\scriptsize
\begin{minipage}{.57\linewidth}
\centering
\tabcolsep=0.05cm
\begin{tabular}{rccc|rr|rr}
\hline
\multicolumn{4}{c|}{Instance} & \multicolumn{2}{c|}{Bottom-up} & \multicolumn{2}{c}{Magic} \\
$C$ & $P$ & $C_r$  & $P_r$  &  Flat  &   Solve &   Flat  & Solve  \\[0.2ex]
\hline
50 & 100 & 5 & 5        & 2.02    & 1.90    & .48     & .01     \\ [0.5ex] 
100 & 300 & 10 & 30     & 16.62   & 91.66   & 2.97    & .07     \\ [0.5ex] 
100 & 500 & 10 & 30     & 24.78   & ---     & 4.39    & .09     \\ [0.5ex] 
250 & 350 & 105 & 105   & 83.45   & ---     & 35.16   & 18.40   \\ [0.5ex] 
250 & 400 & 110 & 110   & 88.61   & ---     & 39.32   & 452.17  \\ [0.5ex] 
300 & 400 & 125 & 150   & 140.36  & ---     & 57.09   & ---     \\ [0.5ex] 
\end{tabular}
\caption{\textsl{UtilPol}}
\label{table:utilpol}
\end{minipage}
\begin{minipage}{.4\linewidth}
\centering
\tabcolsep=0.05cm
\begin{tabular}{rc|rr|rr}
\hline
\multicolumn{2}{r|}{Instance} & \multicolumn{2}{c|}{Bottom-up} & \multicolumn{2}{c}{Magic} \\
$C$ & $C_r$ &  Flat  &   Solve &   Flat  & Solve  \\[0.2ex]
\hline
1000 & 15  & 24.27   & 5.20    & .79     & .70     \\ [0.5ex] 
1500 & 25  & 53.66   & 17.52   & 1.39    & 2.07    \\ [0.5ex] 
2000 & 35  & 94.38   & 66.81   & 2.31    & 8.75    \\ [0.5ex] 
3000 & 50  & 209.70  & 86.35   & 1.71    & 17.87   \\ [0.5ex]
3500 & 60  & ---     & ---     & 5.58    & 19.18   \\ [0.5ex] 
5000 & 80  & ---     & ---     & 9.63    & 51.45   \\ [0.5ex] 
\end{tabular}
\caption{\textsl{CompanyCon}}
\label{table:companycon}
\end{minipage}
\end{table}

Table \ref{table:road} shows the results for \textsl{RoadCon}.
$N$ is the number of nodes, and SCCs is 
the minimum number of strongly connected components in the graph. 
We compare exhaustive grounding (simply creating $\ground(P)$) against
bottom-up grounding, and bottom-up grounding with 
magic set transformation. A --- 
represents either the flattener/solver did not finish in 10
minutes or that it ran out of memory.
Using bottom-up grounding, the founded variables representing 
shortest paths between two nodes that are not in the same
SCC and the corresponding useless rules are not created. 
Clearly bottom-up grounding is far superior to naively grounding
everything, and magic sets substantially improves on this.
Tables \ref{table:utilpol} and \ref{table:companycon} show the results for utilitarian policies and
company controls respectively. 
The running time for exhaustive and bottom-up for these benchmark
are similar, therefore, the comparison is only given for bottom-up vs. magic
sets. For \textsl{UtilPol}, $C$ and $P$ represent the number of citizens and policies respectively,
$C_r$ represents the maximum number of relevant citizens on which the
happiness of $t$ directly or indirectly depends and  
$P_r$ is the maximum number of policies on which the happiness of $t$ and other citizens
in $C_r$ depends. 
This is the part of the instance that is relevant to the query and the rest is ignored when magic sets are enabled. 
It can be seen
that magic sets outperform regular bottom-up grounding, especially when the
relevant part of the instance is small compared to the entire instance.
Note that when $P_r$ is small, 
the flattening time for magic sets is greater that the solving time
since the resulting set of rules is actually simple. 
This changes, however, as $P_r$ is increased.
For \textsl{CompanyCon},
$C$ is the number of total companies while $C_r$ is the maximum number of companies reachable
from the destination in the given ownership graph. 
The table shows that if $C_r$ is small
compared to $C$, magic sets can give significant advantages.
\noexamples{The unnecessary founded variables and rules can 
make solving time considerably higher if magic sets optimization is not used.}

\section{Conclusion}

Bound Founded Answer Set Programming extends ASP to
disallow circular reasoning over numeric entities. 
\noexamples{While the semantics of BFASP is a simple generalization of the semantics of
ASP, to be practically useful we must be able to model non-ground BFASPs
in a high level way.}  
In this paper, we show how we can flatten and ground a
non-ground BFASP while preserving its semantics, thus creating an executable
specification of the BFASP problem. We show that using bottom-up grounding
and magic sets transformation we can significantly improve the efficiency of
computing BFASPs. The existing magic set techniques are only defined for the normal
rule form, involving only founded Boolean variables. We have extended magic sets to BFASP, a formalism that
has significantly more sophisticated rule forms and has both standard and founded variables,
that can moreover be Boolean or numeric.

\bibliography{paper}
\bibliographystyle{acmtrans}

\newtheorem{apxTheorem}{Theorem}

\appendix
\section{Proofs of theorems}
\begin{apxTheorem}
\thmflattening
\end{apxTheorem}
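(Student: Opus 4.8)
The plan is to exhibit an explicit correspondence between the stable solutions of $P$ and $P'$ and to check that it preserves both ingredients of stability: constraint satisfaction and minimality with respect to the reduct. Write $f$ for the composite body of $r$, so that $r = (y \geq f(x_1,\ldots,x_n), y)$ with $f(x_1,\ldots,x_n) = f_1(x_1,\ldots,x_k,f_2(x_{k+1},\ldots,x_n))$; this identity holds pointwise and is the engine of the argument. The map I would use takes an assignment $\theta$ over $\vars(P)$ and extends it to $\theta'$ over $\vars(P') = \vars(P) \cup \{y'\}$ by setting $\theta'|_{\vars(P)} = \theta$ and $\theta'(y') = f_2(\theta(x_{k+1}),\ldots,\theta(x_n))$. \textbf{Step 1 (constraint satisfaction).} Since only $r$ is changed, every constraint of $P$ is literally a constraint of $P'$ and is satisfied identically under $\theta$ and $\theta'$; it remains to check $r$ against $\{r_1,r_2\}$. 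If $\theta'$ satisfies $r_2$ then $\theta'(y') \geq f_2(\ldots)$, and because $f_1$ is increasing in the slot occupied by $y'$, satisfying $r_1$ gives $\theta'(y) \geq f_1(\ldots,\theta'(y')) \geq f_1(\ldots,f_2(\ldots)) = f(\ldots)$, so $r$ holds; conversely, with $\theta'(y') = f_2(\ldots)$, rule $r_2$ holds with equality and $r_1$ reduces to exactly $r$.

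\textbf{Step 2 (the reducts agree away from $y'$).} I would next compare the positive-CPs $\red{P}{\theta}$ and $\red{P'}{\theta'}$. Every rule other than $r$ (resp.\ $r_1,r_2$) is shared, does not mention $y'$, and since $\theta'$ agrees with $\theta$ on $\vars(P)$ contributes an identical reduced constraint to both programs; so the reducts differ only in that one contains $\red{r}{\theta}$ while the other contains $\red{r_1}{\theta'}$ and $\red{r_2}{\theta'}$. Recall the reduct freezes all standard arguments together with the founded arguments in which the body is \emph{decreasing}, and keeps live the founded arguments in which the body is increasing. For an argument occurring only among $x_1,\ldots,x_k$ its status in $r$ matches its status in $r_1$ (the monotonicity of $f$ in it equals that of $f_1$), and for one occurring only among $x_{k+1},\ldots,x_n$ its status in $r$ matches its status in $r_2$ (feeding through the increasing $y'$-slot preserves monotonicity). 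The same-monotonicity hypothesis is exactly what settles the remaining case: a variable shared between the two argument groups is frozen in $r_1$ iff frozen in $r_2$ iff frozen in $r$. Hence the live founded variables of $\red{r}{\theta}$ are precisely the union of those of $\red{r_1}{\theta'}$ and $\red{r_2}{\theta'}$ (together with $y'$, which is always live), and the frozen values injected are the same in both programs.

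\textbf{Step 3 (least solutions coincide).} I would invoke the characterization that a positive-CP has a unique minimal solution obtained as the least fixed point of the upward-propagation operator, and argue these least fixed points correspond. Because $y'$ is fresh, $r_2$ is its only defining rule, so in the least solution of $\red{P'}{\theta'}$ the value forced on $y'$ is exactly $f_2$ evaluated at the current values of its (live and frozen) arguments; pushing this into the increasing $y'$-slot of $\red{r_1}{\theta'}$ raises $y$ to $f_1(\ldots,f_2(\ldots)) = f(\ldots)$, which is precisely the bound $\red{r}{\theta}$ places on $y$. Concretely I would show that the least fixed point $\mu$ of $\red{P}{\theta}$ extends, by defining $\mu(y') := f_2(\ldots)$, to a fixed point of $\red{P'}{\theta'}$, and that the projection onto $\vars(P)$ of the least fixed point $\mu'$ of $\red{P'}{\theta'}$ is a fixed point of $\red{P}{\theta}$; uniqueness of minimal solutions then forces $\mu'|_{\vars(P)} = \mu$ and $\mu'(y') = f_2(\ldots)$. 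Combined with Step 1, this yields that $\theta$ equals the minimal solution of $\red{P}{\theta}$ iff $\theta'$ equals the minimal solution of $\red{P'}{\theta'}$; together with the observation that any stable solution of $P'$ must assign $y'$ its unique justified value, the two-way equivalence follows.

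\textbf{Main obstacle.} The delicate part is Step 3, because the arguments of $f_2$ may themselves depend, through other rules, on $y$ and hence on $y'$, so one cannot evaluate $y'$ once and discard it---the two least-fixed-point computations must be shown to advance in lockstep. The leverage comes from the positive-CP structure (monotone operators with unique least fixed points) together with the same-monotonicity hypothesis of Step 2, which is exactly what guarantees that a shared variable is frozen or kept live \emph{consistently} across $r_1$ and $r_2$; without it a shared variable could be live in one split rule and frozen in the other, the two reducts would genuinely diverge, and the correspondence would break (mirroring the spurious-solution phenomenon that motivates using founded rather than standard introduced variables).

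\end{document}
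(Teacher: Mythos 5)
Your proposal is correct and follows essentially the same route as the paper's proof: the same extension $\theta'(y') = f_2(\theta(x_{k+1}),\ldots,\theta(x_n))$, the same use of the same-monotonicity hypothesis to show that the reduct of $r$ corresponds to the reducts of $r_1$ and $r_2$ (the paper phrases this as $\red{r}{\theta} \Leftrightarrow \proj{y'}{\red{r_1}{\theta'} \wedge \red{r_2}{\theta'}}$), and a transfer-of-minimality argument in both directions. Your Step 3 merely spells out, via least-fixed-point reasoning, what the paper compresses into the claim that the two reducts ``force the same lower bounds,'' so the content is the same, just more explicit.
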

\begin{proof}
For a rule $s=(c,head)$, let $con(s)=c$.
By construction, $con(r) \Leftrightarrow \proj{y'}{con(r_1) \wedge con(r_2)}$ and all the other constraints in $P$ and $P'$ are identical. Also, given any assignment $\theta'$ of $P'$, since $f_1$ is increasing in the argument where $f_2$ appears, $y'$ will be left as a variable in $\red{r_1}{\theta}$. 
Consider an assignment $\theta'$ over $\vars(P')$, and let $\theta = \theta'|_{vars(P)}$.
Recall that the reduct of a program with respect to an assignment replaces all the standard variables and founded variables that are not decreasing in any rule's constraint with
its value in that assignment. Since $f_1$ and $f_2$ have the same monotonicity w.r.t. any variable common in $\{x_1, \ldots, x_k\}$ and $\{x_{k+1}, \ldots, x_n\}$, it will either
be replaced by its assignment value in both $f_1$ and $f_2$ or not be replaced at all.
Therefore, the relation $\red{r}{\theta} \Leftrightarrow \proj{y'}{\red{r_1}{\theta'} \wedge \red{r_2}{\theta'}}$ is also valid.
Furthermore, all other constraints in $P^{\theta}$ and $P'^{\theta'}$ are identical.

Suppose $\theta$ is a stable solution of $P$. Let $\theta'$ be the extension of $\theta$ to variable $y'$ s.t. $\theta'(y') = f_2(\theta(x_k), \ldots, \theta(x_n))$. Clearly, this choice of $\theta'(y')$ allows $\theta'$ to satisfy all the constraints of $P'$ and allows $\theta'|_{vars(P'^{\theta'})}$ to satisfy all the constraints of $P'^{\theta'}$. To prove that $\theta'$ is a stable solution of $P'$, we just need to show that there is no smaller solution of $P'^{\theta'}$ than $\theta'|_{vars(P'^{\theta'})}$. Since $\red{r}{\theta} \Leftrightarrow \proj{y'}{\red{r_1}{\theta'} \wedge \red{r_2}{\theta'}}$ and all other constraints in $P^{\theta}$ and $P'^{\theta'}$ are identical, $P'^{\theta'}$ must force the same lower bounds on the variables in $vars(P^{\theta})$ as $P^{\theta}$ does. Hence, none of those values can go any lower. Also, $\red{r_2}{\theta'}$ forces $y' \geq f_2(\theta(x_k), \ldots, \theta(x_n))$, and so $f_2(\theta(x_k), \ldots, \theta(x_n))$ is the lowest possible value for $y'$. Hence $\theta'|_{vars(P'^{\theta'})}$ is the minimal solution of $P'^{\theta'}$ and $\theta'$ is a stable solution of $P'$.

Suppose $\theta'$ is a stable solution of $P'$. Let $\theta =
\theta'|_{vars(P)}$. Since $con(r) \Leftrightarrow \proj{y'}{con(r_1)
  \wedge con(r_2)}$ and all the other constraints in $P$ and $P'$ are
identical, $\theta$ satisfies all the constraints in $P$. Since
$\red{r}{\theta} \Leftrightarrow \proj{y'}{\red{r_1}{\theta'} \wedge
  \red{r_2}{\theta'}}$ and all other constraints in $P^{\theta}$ and
$P'^{\theta'}$ are identical, $\theta|_{vars(P^{\theta})}$ satisfies all the
constraints in $P^{\theta}$. To prove that $\theta$ is a stable solution of
$P$, we just need to show that there is no smaller solution of $P^{\theta}$
than $\theta|_{vars(P^{\theta})}$. Since $\red{r}{\theta} \Leftrightarrow
\proj{y'}{\red{r_1}{\theta'} \wedge \red{r_2}{\theta'}}$ and all other
constraints in $P^{\theta}$ and $P'^{\theta'}$ are identical, $P^{\theta}$
must force the same lower bounds on the variables in $vars(P^{\theta})$ as
$P'^{\theta'}$ does. Hence, none of those values can go any lower,
$\theta|_{vars(P^{\theta})}$ is the minimal solution of $P^{\theta}$ and
$\theta$ is a stable solution of $P$.
\end{proof}

\begin{apxTheorem}
\thmgrounding
\end{apxTheorem}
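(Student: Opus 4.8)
The plan is to prove the two implications separately, in each case comparing the reduct of $G$ with the reduct of $M$ on the shared variables, in the style of the proof of Theorem~\ref{thm:flattening}. Let $V = \vars(M)$. The magic-set grounding computes, as one fixpoint, both which variables are \emph{reachable} from the query (through the magic rules) and which reachable variables are \emph{justifiable} above their $\ujb$ (so that some rule for them actually fires); $V$ is exactly those that are both. A rule for head $y$ is ground only after $m\_y$ has been created, so every modified rule $(y \geq f(\bar{x}) \leftarrow m_y, y)$ kept in $M$ has $m_y$ satisfied and collapses to $(y \geq f(\bar{x}), y)$; filtering the magic variables and magic rules out of $M$ is therefore harmless. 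Consequently $M$ can be read as the sub-program of $G$ given by the constraints of $G$, the rules of $G$ with head in $V$ (with any body variable that is reachable but not justifiable replaced by its $\ujb$ constant, as the grounding does for the disjunctive and $\sm$ forms), and the rules $(y \geq \ujb(y), y)$ for $y \in V \cap \AFV$.

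The key lemma I would establish is a \emph{relevance} property: for every assignment $\theta'$ over $\vars(G)$, the rules of $\red{G}{\theta'}$ whose head lies in $V$ agree with the rules of $\red{M}{\theta'|_V}$ up to useless rules and the $\ujb$-constant substitutions, so that the two reducts force identical lower bounds on the variables of $V$; moreover no variable outside $V$ occurs in any constraint or influences any variable of $V$. This rests on two facts. First, $\phi_r$ is a \emph{necessary} condition for $r$ to justify its head strictly above $\ujb(y)$, so a rule omitted from $M$ cannot raise any founded variable above its $\ujb$ in any assignment, and a body variable plugged in at its $\ujb$ constant is one that genuinely stays at $\ujb$ in the minimal model (``not justifiable'' means ``equal to $\ujb$''). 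Second, the magic rules propagate interest backwards exactly as $\phi_r$ dictates---unconditionally in the disjunctive case and guarded by $\created$ of the earlier conjuncts in the conjunctive case---so every variable whose value can influence a query variable is marked reachable, and every variable occurring in a constraint is already seeded into the query.

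For the first implication, let $\theta'$ be a stable solution of $G$ and set $\theta = \theta'|_V$. The constraints of $M$ are a subset of those of $G$, hence satisfied by $\theta$. By the lemma $\red{M}{\theta}$ and $\red{G}{\theta'}$ force the same lower bounds on $V$, and the added rules $(y \geq \ujb(y), y)$ keep justified any $\ujb$ value that had relied on a pruned rule; so $\theta$ is the minimal solution of $\red{M}{\theta}$ and hence a stable solution of $M$. For the converse, let $\theta$ be a stable solution of $M$. I would extend it to $\theta'$ over $\vars(G)$ by giving the pruned part its least model: fix the values of $V$ to $\theta$, and assign each pruned founded variable the unique least solution of the resulting positive-CP on the pruned variables (pruned standard variables take their lower bounds). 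Since, by the lemma, pruned variables occur in no constraint and cannot influence $V$, this $\theta'$ satisfies every constraint of $G$ and is the minimal solution of $\red{G}{\theta'}$; hence $\theta'$ is a stable solution of $G$ with $\theta'|_V = \theta$.

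The step I expect to be the main obstacle is the existence of the least model used in the extension, which is exactly where \emph{unstratified} components matter. A stratified pruned component always admits a stable completion---its least model exists and is computable stratum by stratum---but a component containing a decreasing edge inside a cycle may admit none, just as an odd negative loop leaves a normal logic program with no answer set; pruning such a component could then let $M$ have a stable solution that extends to no stable solution of $G$, breaking the converse direction. Forcing every magic variable of every variable in such a component into the initial query guarantees that no unstratified component is ever pruned, so the pruned remainder is stratified and its least model always exists; a formal justification of this fix reduces to the corresponding result for Datalog with negation~\cite{magic_datalog_journal}. The remaining work is bookkeeping: verifying that the disjunctive $\phi_r$ form, which has no analogue in ordinary logic programming, is handled correctly by the per-disjunct magic rules, and that the $\ujb$ constants plugged into the retained rules really do coincide with the minimal-model values of the variables they replace.
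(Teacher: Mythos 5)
Your proposal is correct and follows essentially the same route as the paper's own proof: both rest on the necessity of the $\phi_r$ conditions (so pruned rules are useless and $\ujb$-substitutions are sound), the fact that no pruned variable can influence a variable of $M$, restriction of a stable solution of $G$ for one direction, and extension via the stratum-by-stratum least fixpoint of the pruned part for the other, with the inclusion of unstratified components in the query being exactly what guarantees that this extension exists. The only differences are presentational (your explicit ``relevance lemma'' versus the paper's split of the minimal-solution computation into $M^{\theta'}$ followed by $P_i^{\theta'}$, and fixing pruned standard variables to lower bounds rather than arbitrary values), neither of which changes the substance.
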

\begin{proof}

Let us first argue about the correctness of our grounding approach presented in Section~\ref{sec:grounding}.
We can analyze each row in Table \ref{table:cond} and reason
that until the condition is satisfied, the rule can be ignored without changing the stable solutions
of the program. We only provide a brief sketch and do not analyze each case in the table. Say, e.g. for
$y \geq \mx(x_1, \ldots, x_n)$, if the condition is not satisfied, 
this means that no $x_i$ has a rule in the program that justifies a value higher than its $\ujb$, 
and no $x_i$ initially justifies a bound on $y$ that is greater than $\ujb(y)$. 
If we include a ground version of this rule in the program, then after taking 
the reduct w.r.t. any assignment, the rule can never justify any bound
on the head, and hence can safely be eliminated.

Let $P_i$ be part of $\ground(P)$ that is not included in $M$.
It can be seen from the description of magic set transformation that any variable
in $P_i$ either cannot be reached from any variable in $M$ in the dependency graph of $P$,
or can only be reached through useless rules. Since useless rules can be
eliminated as argued above,
we conclude that no variable in $M$ can reach any variable in $P_i$ in the dependency graph.
This obviously also holds for dependency graph of respective reduced program w.r.t. some
assignment. This means that for a given assignment $\theta'$, the minimal order computation can
first be performed on $M^{\theta'}$ which fixes all the variables in $\vars(M)$,
and then on $P_i^{\theta'}$ which fixes all the remaining variables, i.e., variables in $\vars(P) - \vars(M)$.
Combining both the minimal solutions would be the same as computing the minimal solution
for $G^{\theta'}$. This proves the first result.

For the second result, since all unstratified parts in $P$ are included in
$M$, all the intra-component edges in the dependency
graph of $P_i$ are marked increasing (positive). 
It can be shown that for such a program, once we fix all the standard variables appearing
in any rule in $P_i$, there is a unique stable solution that can be computed as the
\emph{iterated least fixpoint} of $P_i$. This is similar to the well known result
for logic programs that states that for a stratified program, the unique stable solution can be computed as
the iterated least fixpoint of the program (Corollary 2 in \cite{stable}). 
Therefore, if we are given a stable solution $\theta$
for $M$, we can extend it to $\theta'$ by fixing all the unfixed standard variables
to any value, and then computing the iterated least fixpoint, which will extend $\theta'$ over
founded variables of $P_i$, and will be a unique stable solution given the values
of all standard variables.
\end{proof}

\end{document}